\newtheorem{proposition}{Proposition}
\newtheorem{definition}{Definition}
\newtheorem{corollary}{Corollary}
\DeclareMathOperator*{\argmax}{arg\,max}
\newenvironment{proofsketch}{%
	\proof}{\endproof}
\newcommand{\tikzmark}[1]{\tikz[overlay,remember picture] \node (#1) {};}
\algnewcommand{\lIf}[1]{\State\algorithmicif\ #1\ \algorithmicthen}
\newcommand*{\AddNote}[4]{%
	\begin{tikzpicture}[overlay, remember picture]
	\draw [decoration={brace,amplitude=0.5em},decorate,ultra thick,black]
	($(#3)!(#1.north)!($(#3)-(0,1)$)$) --  
	($(#3)!(#2.south)!($(#3)-(0,1)$)$)
	node [align=center, text width=2.5cm, pos=0.5, anchor=west] {#4};
	\end{tikzpicture}
}%
\title{Mitigating Negative Side Effects via Environment Shaping}
\author{Sandhya Saisubramanian and Shlomo Zilberstein\\}
\begin{document}
	
	\maketitle

\begin{abstract}
	Agents operating in unstructured environments often produce \emph{negative side effects} (NSE), which are difficult to identify at design time. While the agent can learn to mitigate the side effects from human feedback, such feedback is often expensive and the rate of learning is sensitive to the agent's state representation.
We examine how humans can assist an agent, beyond providing feedback, and exploit their broader scope of knowledge to mitigate the impacts of NSE.
We formulate this problem as a human-agent team with decoupled objectives. The agent optimizes its assigned task, during which its actions may produce NSE. The human shapes the environment through minor reconfiguration actions so as to mitigate the impacts of the agent's side effects, without affecting the agent's ability to complete its assigned task. We present an algorithm to solve this problem and analyze its theoretical properties. Through experiments with human subjects, we assess the willingness of users to perform minor environment modifications to mitigate the impacts of NSE. Empirical evaluation of our approach shows that the proposed framework can successfully mitigate NSE, without affecting the agent's ability to complete its assigned task.
\end{abstract}

\section{Introduction}
Deployed AI agents often require complex design choices to support safe operation in the open world.
During design and initial testing, the system designer typically ensures that the agent's model includes all the necessary details relevant to its assigned task. Inherently, many other details of the environment that are unrelated to this task may be ignored. Due to this model incompleteness, the deployed agent's actions may create \emph{negative side effects} (NSE)~\cite{amodei2016concrete,saisubramanian2020avoiding}. 
For example, consider an agent that needs to push a box to a goal location as quickly as possible (Figure~\ref{fig:orig-example}). Its model includes accurate information essential to optimizing its assigned task, such as reward for pushing the box. But details such as the impact of pushing the box over a rug may not be included in the model if the issue is overlooked at system design. Consequently, the agent may push a box over the rug, dirtying it as a side effect. Mitigating such NSE is critical to improve trust in deployed AI systems.  

\begin{figure}
	\centering
	\subfigure[Initial Environment]{\includegraphics[width=1.5in]{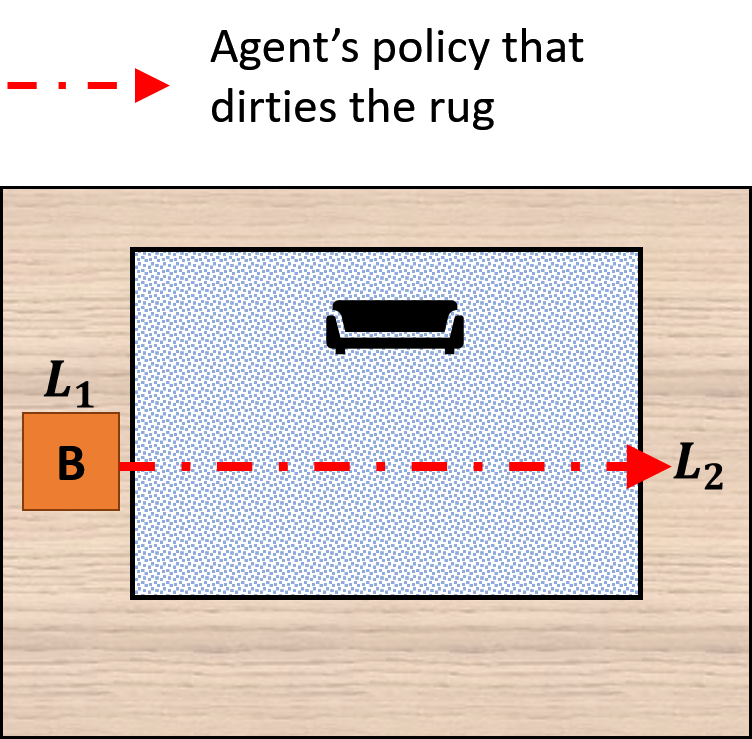}\label{fig:orig-example}} \quad
	\subfigure[Protective sheet over rug]{\includegraphics[width=1.5in]{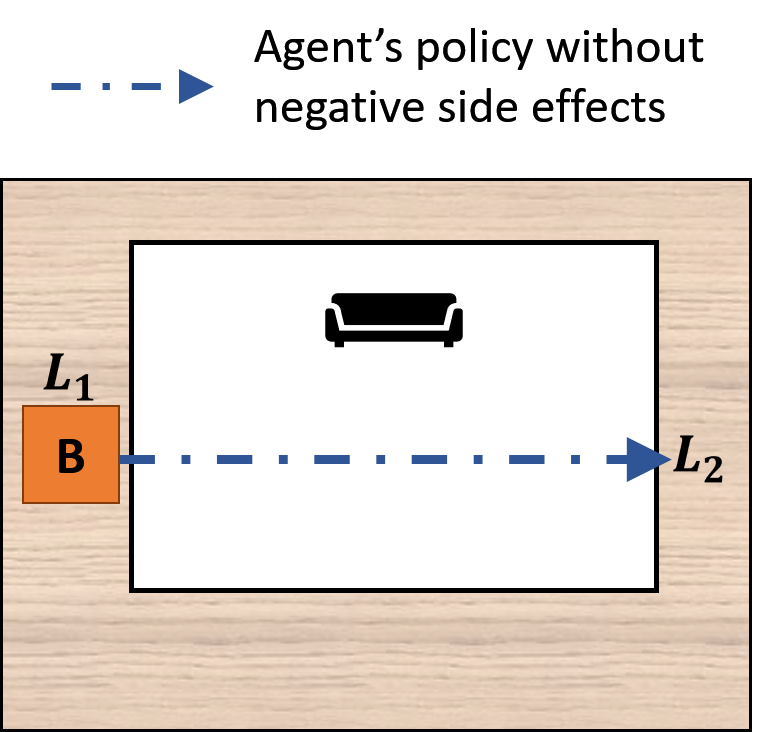}\label{fig:example-mod2}}
	\caption{Example configurations for boxpushing domain: (a) denotes the initial setting in which the actor dirties the rug when pushing the box over it; (b) denotes a modification that avoids the NSE, without affecting the actor's policy.}
	\label{fig:example}
\end{figure}

It is practically impossible to identify all such NSE during system design since agents are deployed in varied settings. Deployed agents often do not have any prior knowledge about NSE, and therefore they do not have the ability to minimize NSE.  \emph{How can we leverage human assistance and the broader scope of human knowledge to mitigate negative side effects, when agents are unaware of the side effects and the associated penalties?}

A common solution approach in the existing literature is to update the agent's model and policy by learning about NSE through feedbacks~\cite{hadfield2017inverse,zhang2018minimax,SKZijcai20}.
The knowledge about the NSE can be encoded by constraints~\cite{zhang2018minimax} or by updating the reward function~\cite{hadfield2017inverse,SKZijcai20}. These approaches have three main drawbacks. First, the agent's state representation is assumed to have all the necessary features to learn to avoid NSE~\cite{SKZijcai20,zhang2018minimax}. 
In practice, the model may include only the features related to the agent's task. This may affect the agent's learning and adaptation since the NSE could potentially be non-Markovian with respect to this limited state representation. Second, extensive model revisions will likely require suspension of operation and exhaustive evaluation before the system can be redeployed. This is inefficient when the NSE is not safety-critical, especially for complex systems that have been carefully designed and tested for critical safety aspects. Third, updating the agent's policy may not mitigate NSE that are unavoidable. For example, in the boxpushing domain, NSE are unavoidable if the entire floor is covered with a rug. %Nevertheless, such instances can be handled by reconfiguring the environment, such as adding a protective sheet over the rug.

The key insight of this paper is that agents often operate in environments that are configurable, which can be leveraged to mitigate NSE. We propose  \emph{\textbf{environment shaping}} for deployed agents: a process of applying modest modifications to the current environment to make it more agent-friendly and minimize the occurrence of NSE. Real world examples show that modest modifications can substantially improve agents' performance. For example, Amazon improved the performance of its warehouse robots by optimizing the warehouse layouts, covering skylights to reduce the glare, and repositioning the air conditioning units to blow out to the side~\cite{amazonwarehouse}. Recent studies show the need for infrastructure modifications such as traffic signs with barcodes and redesigned roads to improve the reliability of autonomous vehicles %% in the real world~
\cite{averyTraffic,hendrickson2014connected,michiganroads}.
Simple modifications to the environment have also accelerated agent learning~\cite{randlov2000shaping} and goal recognition~\cite{keren2014goal}. These examples show that environment shaping can improve an agent's performance with respect to its \emph{assigned task}. We target settings in which \emph{environment shaping can reduce NSE}, without significantly degrading the performance of the agent in completing its assigned task. 

Our formulation consists of an \emph{actor} and a \emph{designer}. The actor agent computes a policy that optimizes the completion of its assigned task and has no knowledge about NSE of its actions. The designer agent, typically a human, shapes the environment through minor modifications so as to mitigate the NSE of the actor's actions, without affecting the actor's ability to complete its task. The environment is described using a configuration file, such as a map, which is shared between the actor and the designer. Given an environment configuration, the actor computes and shares sample trajectories of its optimal policy with the designer. The designer measures the NSE associated with the actor's policy and modifies the environment by updating the configuration file, if necessary. We target settings in which the completion of the actor's task is prioritized over minimizing NSE. 
The sample trajectories and the environment configuration are the only knowledge shared between the actor and the designer. 
%% Apart from the sample trajectories and the environment configuration, no other knowledge is shared between the actor and the designer. 

The advantages of this decoupled approach to minimize NSE are: (1) robustness---it can handle settings in which the actor is unaware of the NSE and does not have the necessary %%features in its 
state representation to effectively learn about NSE; and (2) bounded-performance---by controlling the space of valid modifications, bounded-performance of the actor with respect to its assigned task can be guaranteed.  %%, while mitigating NSE. 

Our primary contributions are: (1) introducing a novel framework for environment shaping to mitigate the undesirable side effects of agent actions; (2) presenting an algorithm for shaping and analyzing its properties; (3) providing the results of a human subjects study to assess the attitudes of users towards negative side effects and their willingness to shape the environment; and (4) empirical evaluation of the approach on two domains. 

\section{Actor-Designer Framework}
The problem of mitigating NSE is formulated as a collaborative actor-designer framework with decoupled objectives. The problem setting is as follows. The \emph{actor} agent operates based on a Markov decision process (MDP) $M_a$ in an environment that is configurable and described by a configuration file $E$, such as a map. The agent's model $M_a$ includes the necessary details relevant to its assigned task, which is its primary objective $o_P$. A factored state representation is assumed. The actor computes a policy $\pi$ that optimizes $o_P$. Executing $\pi$ may lead to NSE, unknown to the actor. The \emph{environment designer}, typically the user, measures the impact of NSE associated with the actor's $\pi$ and shapes the environment, if necessary. The actor and the environment designer share the configuration file of the environment, which is updated by the environment designer to reflect the modifications. 
Optimizing $o_P$ is prioritized over minimizing the NSE. Hence, shaping is performed \emph{in response} to the actor's policy. In the rest of the paper, we refer to the environment designer simply as designer---not to be confused with the designer of the agent itself.

Each modification is a sequence of design actions. An example is $\{ move(table,l_1,l_2),remove(rug)\}$, which moves the table from location $l_1$ to $l_2$ and removes the rug in the current setting. We consider the set of modifications to be finite since the environment is generally optimized for the user and the agent's primary task~\cite{shah2019preferences} and the user may not be willing to drastically modify it. Additionally, the set of modifications for an environment is included in the problem specification since it is typically controlled by the user and rooted in the NSE they want to mitigate. 

We make the following assumptions about the nature of NSE and the modifications: (1) NSE are undesirable but not safety-critical, and its occurrence does not affect the actor's ability to complete its task; (2) the start and goal conditions of the actor are fixed and cannot be altered, so that the modifications do not alter the agent's task; and (3) modifications are applied tentatively for evaluation purposes and the environment is reset if the reconfiguration affects the actor's ability to complete its task or the actor's policy in the modified setting does not minimize the NSE. 

\begin{definition}
	An actor-designer framework to mitigate negative side effects (\textbf{\emph{AD-NSE}}) is defined by $\langle E_0, \mathcal{E}, M_a, M_d,  \delta_A, \delta_D \rangle $ with:
	\begin{itemize}
		\item $E_0$ denoting the initial environment configuration;
		\item $\mathcal{E}$ denoting a finite set of possible reconfigurations of $E_0$;
		\item $M_a = \langle S,A,T,R,s_0, s_G\rangle$ is the actor's MDP with a discrete and finite state space $S$, discrete actions $A$, transition function $T$, reward function $R$, start state $s_0 \in S$, and a goal state $s_G \in S$;
		\item $M_d = \langle \Omega, \Psi,  C, N \rangle$ is the model of the designer with 
		\begin{itemize}
			\item $\Omega$ denoting a finite set of valid modifications that are available for $E_0$, including $\emptyset$ to indicate that no changes are made;
			\item $\Psi: \mathcal{E} \times \Omega \rightarrow \mathcal{E}$ determines the resulting environment configuration after applying a modification $\omega \in \Omega$ to the current configuration, and is denoted by $\Psi(E,\omega)$;
			\item $C: \mathcal{E} \times \Omega \rightarrow \mathbb{R}$ is a cost function that specifies the cost of applying a modification to an environment, denoted by $C(E,\omega)$, with $C(E,\emptyset)\!=\!0, \forall E\!\in\!\mathcal{E}$;  
			\item $N = \langle \pi , E, \zeta \rangle $ is a model specifying the penalty for negative side effects in environment $E \in \mathcal{E}$ for the actor's policy $\pi$, with $\zeta$ mapping states in $\pi$ to $E$ for severity estimation.
		\end{itemize}
		\item $\delta_A\!\ge\!0 $ is the actor's slack, denoting the maximum allowed deviation from the optimal value of $o_P$ in $E_0$, when recomputing its policy in a modified environment; and
		\item $\delta_D \ge 0$ indicates the designer's NSE tolerance threshold.
	\end{itemize}
\end{definition}

\paragraph{\textbf{Decoupled objectives}}  The actor's objective is to compute a policy $\pi$ that maximizes its expected reward for $o_P$, from the start state $s_0$ in the current environment configuration $E$:
\begin{align}
&\max_{\pi \in \Pi} V^{\pi}_P (s_0 \vert E), \nonumber \\
 V^{\pi}_P (s) = R(s,\pi(s)) &+ \sum_{s' \in S} T(s,\pi(s),s') V^{\pi}_P (s'), \forall s \in S.\nonumber
\end{align}
When the environment is modified, the actor recomputes its policy and may end up with a longer path to its goal.  The slack $\delta_A$ denotes the maximum allowed deviation from the optimal expected reward in $E_0$, 
$V^*_P (s_0 \vert E_0)$, to facilitate minimizing the NSE via shaping. A policy $\pi'$ in a modified environment $E'$ satisfies the slack $\delta_A$ if 
\[V^*_P (s_0 \vert E_0) - V^{\pi'}_P (s_0 \vert E') \leq \delta_A. \]

Given the actor's $\pi$ and environment configuration $E$, the designer first estimates its corresponding NSE and the associated penalty, denoted by $N_{\pi}^E$. The environment is modified if $N_{\pi}^{E} > \delta_D$, where $\delta_D$ is the designer's tolerance threshold for NSE, \emph{assuming $\pi$ is fixed}.  Given $\pi$ and a set of valid modifications $\Omega$, the designer selects a modification that maximizes its utility: 
\begin{align} 
&\max_{\omega \in \Omega} U_{\pi}(\omega) \nonumber \\ 
U_{\pi}(\omega) =&  \underbrace{\big(N_{\pi}^{E} - N_{\pi}^{\Psi(E,\omega)} \big)}_\text{reduction in NSE} - C(E,\omega). \label{designer-obj} \end{align}

Typically the designer is the user of the system and their preferences and tolerance for NSE is captured by the utility of a modification. The designer's objective is to balance the tradeoff between minimizing the NSE and the cost of applying the modification. It is assumed that the cost of the modification is measured in the same units as the NSE penalty. The cost of a modification may be amortized over episodes of the actor performing the task in the environment. %The designer shapes the environment using a utility-maximizing modification and updates the configuration file, based on which the actor recomputes its policy for optimizing $o_P$. 
The following properties are used to guarantee bounded-performance of the actor when shaping the environment to minimize the impacts of NSE. 

\begin{definition}
	Shaping is \textbf{admissible} if it results in an environment configuration in which (1) the actor can complete its assigned task, given $\delta_A$ and (2) the NSE does not increase, relative to $E_0$. \label{defn:admissible}
\end{definition}

\begin{definition}
	Shaping is \textbf{proper} if (1) it is admissible and (2) reduces the actor's NSE to be within $\delta_D$.
%	the actor's policy in the modified environment produces NSE within $\delta_D$. 
\label{defn:proper}
\end{definition}

\begin{definition}
	A \textbf{robust} shaping results in an environment configuration $E$ where \emph{all} valid policies of the actor, given $\delta_A$, produce NSE within $\delta_D$. That is, $N^E_{\pi} \le \delta_D$, $\forall \pi: V^*_P(s_0\vert E_0) - V^{\pi}_P(s_0\vert E) \le \delta_A $.  \label{defn:robust}
\end{definition}

\paragraph{\textbf{Shared knowledge}}
The actor and the designer do not have details about each other's model. Since the objectives are decoupled, knowledge about the exact model parameters of the other agent is not required. Instead, the two key details that are necessary to achieve collaboration are shared: configuration file describing the environment and the actor's policy. The shared configuration file, such as the map of the environment, allows the designer to effectively communicate the modifications to the actor. The actor's policy is required for the designer to shape the environment. Compact representation of the actor's policy is a practical challenge in large problems~\cite{guestrin2001max}. Therefore, instead of sharing the complete policy, the actor provides a finite number of demonstrations $\mathcal{D} = \{\tau_1, \tau_2,\ldots,\tau_n\}$ of its optimal policy for $o_P$ in the current environment configuration. Each demonstration $\tau$ is a trajectory from start to goal, following $\pi$. Using $\mathcal{D}$, the designer can extract the actor's policy by associating states with actions observed in $\mathcal{D}$, and measure its NSE. The designer does not have knowledge about the details of the agent's model but is assumed to be aware of the agent's objectives and its general capabilities of the agent. This makes it straightforward to construe the agent’s trajectories. Naturally, increasing the number and diversity of sample trajectories that cover the actor's reachable states helps the designer improve the accuracy of estimating the actor's NSE and select an appropriate modification. If $\mathcal{D}$ does not starve any reachable state, following $\pi$, then the designer can extract the actor's complete policy.

\section{Solution Approach}
Our solution approach for solving AD-NSE, described in Algorithm~\ref{algo}, proceeds in two phases: a planning phase and a shaping phase. In the planning phase (Line 4), the actor computes its policy $\pi$ for $o_P$ in the current environment configuration and generates a finite number of sample trajectories $\mathcal{D}$, following $\pi$. The planning phase ends with disclosing $\mathcal{D}$ to the designer. The shaping phase (Lines 7-15) begins with the designer associating states with actions observed in $\mathcal{D}$ to extract a policy $\hat{\pi}$, and estimating the corresponding NSE penalty, denoted by $N_{\hat{\pi}}^E$. If $N_{\hat{\pi}}^E > \delta_D$, the designer applies a utility-maximizing modification and updates the configuration file. The planning and shaping phases alternate until the NSE is within $\delta_D$ or until all possible modifications have been tested. Figure~\ref{fig:leader-follower} illustrates this approach.

\begin{figure}[t]
	\centering
	\includegraphics[scale=0.12]{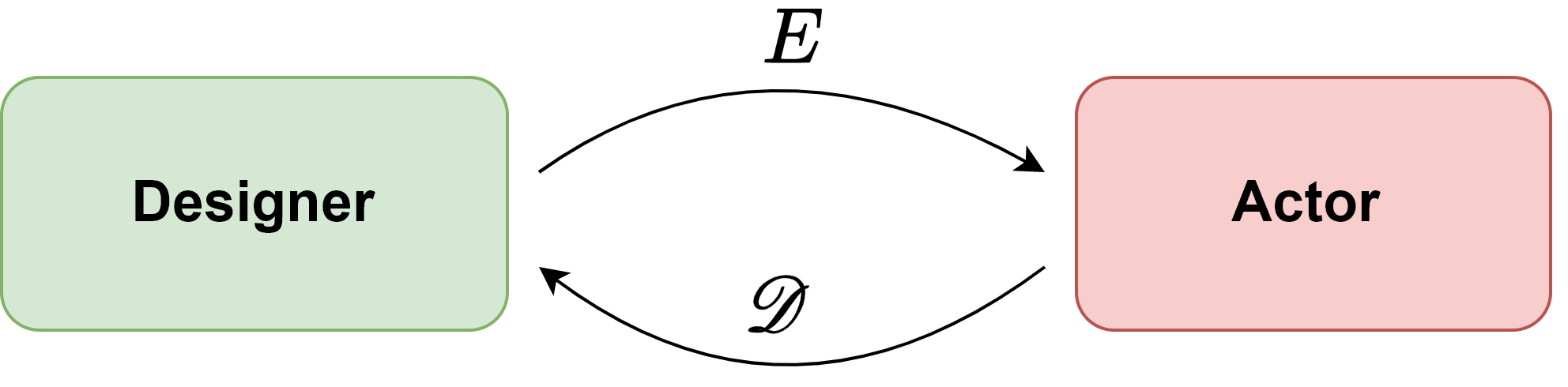}
	\caption{Actor-designer approach for mitigating NSE.}
	\label{fig:leader-follower}
\end{figure}

\begin{algorithm}[t]
	\begin{algorithmic}[1]
		\Require $\langle E_0, \mathcal{E}, M_a, M_d,  \delta_A, \delta_D \rangle $: AD-NSE
		\Require $d$: Number of sample trajectories
		\Require $b$: Budget for evaluating modifications
		\State $E^* \leftarrow E_0$ \Comment Initialize best configuration 
		\State $E \leftarrow E_0$
		\State $n \leftarrow \infty$
		
		\State $\mathcal{D} \leftarrow $ Solve $M_a$ for $E_0$ and sample $d$ trajectories %\Comment Planning phase
		
		\State $\bar{\Omega} \leftarrow$ Diverse\_modifications($b,\Omega, M_d,E_0$)

		\While {$\vert \bar{\Omega} \vert >0 $}
		\State $\hat{\pi} \leftarrow $ Extract policy from $\mathcal{D}$ \tikzmark{top1} \tikzmark{right}
		\If {$N^{E}_{\hat{\pi}} < n$ }
		\State $n \leftarrow N^{E}_{\hat{\pi}}$
		\State $E^* \leftarrow E$ 
		\EndIf
		
		\lIf{$N^{E}_{\hat{\pi}} \le \delta_D $} break
		
		\State $\omega^* \leftarrow \argmax_{\omega \in \bar{\Omega}} U_{\hat{\pi}}(\omega)$  
		
		\lIf{$\omega^* = \emptyset$} break
		\State $\bar{\Omega} \leftarrow \bar{\Omega} \setminus \omega^*$ \tikzmark{bottom1}	
		\State $\mathcal{D'} \leftarrow $ Solve $M_a$ for $\Psi(E_0, \omega^*)$ and sample $d$ trajectories
		
		\If{$\mathcal{D'} \neq \{\}$}
		\State $\mathcal{D} \leftarrow \mathcal{D'}$
		\State $E \leftarrow \Psi(E_0, \omega^*)$
		\EndIf
		\EndWhile
		\State \Return $E^*$
	\end{algorithmic}
	\caption{\textbf{Environment shaping to mitigate NSE}}
	\label{algo}
	\AddNote{top1}{bottom1}{right}{Shaping phase}
\end{algorithm}

The actor returns $\mathcal{D} = \{\}$ when the modification affects its ability to reach the goal, given $\delta_A$. Modifications are applied tentatively for evaluation and the environment is reset if the actor returns $\mathcal{D} = \{\}$ or if the reconfiguration does not minimize the NSE. Therefore, all modifications are applied to $E_0$ and it suffices to test each $\omega$ without replacement as the actor always calculates the corresponding optimal policy.  When $M_a$ is solved approximately, without bounded-guarantees, it is non-trivial to verify if $\delta_A$ is violated but the designer selects a utility-maximizing $\omega$ corresponding to this policy. 
Algorithm~\ref{algo} terminates when at least one of the following conditions is satisfied: (1) NSE impact is within $\delta_D$; (2) every $\omega$ has been tested; or (3) the utility-maximizing option is no modification, $\omega^*\!=\!\emptyset$, indicating that cost exceeds the reduction in NSE or no modification can reduce the NSE further. When no modification reduces the NSE to be within $\delta_D$, configuration with the least NSE is returned.

Though the designer calculates the utility for all modifications, using $N$, there is an implicit pruning in the shaping phase since only utility-maximizing modifications are evaluated (Line 16).  However, when multiple modifications have the same cost and produce similar environment configurations, the algorithm will alternate multiple times between planning and shaping to evaluate all these modifications, which is  $\vert \Omega \vert$ in the worst case. To minimize the number of evaluations in settings with large $\Omega$, consisting of multiple similar modifications, we present a greedy approach to identify and evaluate diverse modifications.

\begin{algorithm}[t]
	\begin{algorithmic}[1]
		\State $\bar{\Omega} \leftarrow \Omega$
		\lIf{$b \ge \vert \Omega \vert$} \Return$\Omega$ 
		\ForAll{$\omega_1, \omega_2 \in \bar{\Omega}$}
		\If{similarity($\omega_1, \omega_2) \le \epsilon$}
		\State $\bar{\Omega}$ = $\bar{\Omega} \setminus \argmax_{\omega} (C(E_0, \omega_1), C(E_0, \omega_2))$
		\lIf{$\vert \bar{\Omega} \vert = b $} \Return$\bar{\Omega}$ 
		\EndIf
		\EndFor
		
	\end{algorithmic}
	\caption{\textbf{Diverse\_modifications}($b,\Omega, M_d,E_0$)}
	\label{clustering}
\end{algorithm}

\vspace{6pt}
\noindent \textbf{Selecting diverse modifications$~~~$}
Let $0 <\!b\!\le \vert \Omega \vert$ denote the maximum number of modifications the designer is willing to evaluate. When $b\!<\!\vert \Omega \vert$, it is beneficial to evaluate $b$ diverse modifications. Algorithm~\ref{clustering} presents a greedy approach to select $b$ diverse modifications. If two modifications result in similar environment configurations, the algorithm prunes the modification with a higher cost. The similarity threshold is controlled by $\epsilon$. This process is repeated until $b$ modifications are identified. Measures such as the Jaccard distance or embeddings may be used to estimate the similarity between two environment configurations.

\section{Theoretical Properties}
For the sake of clarity of the theoretical analysis, we assume that the designer shapes the environment based on the actor's exact $\pi$, without having to extract it from $\mathcal{D}$, and with $b = \vert \Omega \vert$. These assumptions allow us to examine the properties without approximation errors and sampling biases.

\begin{proposition}
	Algorithm~\ref{algo} produces \textbf{admissible} shaping. \label{prop:admissible} 
\end{proposition}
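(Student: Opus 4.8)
The plan is to read off two loop invariants from Algorithm~\ref{algo} and show that both clauses of Definition~\ref{defn:admissible} follow from them at termination. Under the analysis assumptions of this section, the extracted policy $\hat\pi$ equals the actor's exact optimal policy for $o_P$ in the current configuration; write $\pi_E$ for that policy in configuration $E$, so that the penalty evaluated at the top of each iteration is $N^{E}_{\pi_E}$, and at the very first iteration $E = E_0$ and $N^{E}_{\pi_E} = N^{E_0}_{\pi_{E_0}}$ is the baseline.

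First I would establish the \emph{feasibility invariant}: every configuration ever stored in the variable $E$, and hence every value assigned to $E^*$, is one in which the actor can complete its task within slack $\delta_A$. The base case is immediate, since $E \leftarrow E_0$ and the deviation from $V^*_P(s_0\vert E_0)$ is zero. For the inductive step, $E$ is reassigned only inside the branch guarded by $\mathcal{D}' \neq \{\}$ (Lines 17--19). By the stated convention that the actor returns $\mathcal{D}' = \{\}$ precisely when the modification prevents it from reaching the goal within $\delta_A$, a nonempty $\mathcal{D}'$ certifies $V^*_P(s_0\vert E_0) - V^{\pi'}_P(s_0\vert \Psi(E_0,\omega^*)) \le \delta_A$. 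Since $E^*$ receives only the current value of $E$ (Line 10), the invariant transfers to $E^*$, which gives clause~(1).

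Next I would establish the \emph{monotonicity invariant} for the best-so-far tracker: after each iteration, $n$ equals the least NSE penalty among all configurations examined so far and $E^*$ attains it. Initially $n = \infty$, and the joint update in Lines 8--10 fires exactly when $N^{E}_{\pi_E} < n$, so $n$ is non-increasing and $E^*$ always records a minimizer with $n = N^{E^*}_{\pi_{E^*}}$. The key observation is that the loop runs at least once (since $\bar\Omega \neq \emptyset$), and its first iteration has $E = E_0$, so $N^{E_0}_{\pi_{E_0}} < \infty$ triggers the update and registers the baseline penalty. Consequently $n \le N^{E_0}_{\pi_{E_0}}$ thereafter, and at termination $N^{E^*}_{\pi_{E^*}} = n \le N^{E_0}_{\pi_{E_0}}$, which is exactly clause~(2).

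Finally I would verify that both invariants survive every exit path, since the loop can break because $N^{E}_{\pi_E} \le \delta_D$, because $\omega^* = \emptyset$, or because $\bar\Omega$ is exhausted; in each case the returned $E^*$ still satisfies the two invariants, so the returned configuration is admissible. The main obstacle is not any computation but pinning down the semantics of the empty-demonstration signal: the entire feasibility argument rests on treating $\mathcal{D}' = \{\}$ as a faithful certificate that $\delta_A$ is violated (equivalently, that $\mathcal{D}' \neq \{\}$ guarantees $\delta_A$ is met), together with confirming that $E^*$ is never assigned from an infeasible $E$. Making that correspondence precise is the step requiring care; the remainder is a routine induction over the loop.
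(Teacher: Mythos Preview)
Your proposal is correct and follows essentially the same approach as the paper's own proof, which argues informally from the same two observations: Lines~16--19 guarantee feasibility with respect to $\delta_A$, and the best-so-far tracker at Line~10 ensures the returned $E^*$ has NSE no greater than that of $E_0$. Your version is more carefully structured as explicit loop invariants with an induction over iterations (and you rightly flag that the feasibility clause hinges on the convention that $\mathcal{D}' = \{\}$ exactly signals a $\delta_A$ violation), but the underlying reasoning is identical.
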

\begin{proof}
	Algorithm~\ref{algo} ensures that a modification does not negatively impact the actor's ability to complete its task, given $\delta_A$ (Lines 16-19), and stores the configuration with least NSE (Line 10). Therefore, Algorithm~\ref{algo} is guaranteed to return an $E^*$ with NSE equal to or lower than that of the initial configuration $E_0$. Hence, shaping using Algorithm~\ref{algo} is admissible.
\end{proof}

Shaping using Algorithm~\ref{algo} is not guaranteed to be proper because (1) no $\omega \in \Omega$ may be able to reduce the NSE below $\delta_D$; or (2) the cost of such a modification may exceed the corresponding reduction in NSE. 
With each reconfiguration, the actor is required to recompute its policy. We now show that there exists a class of problems for which the actor's policy is unaffected by shaping.

\begin{definition} 	
	A modification is \textbf{policy-preserving} if the actor's policy is unaltered by environment shaping.
\end{definition}
This property induces policy invariance before and after shaping and is therefore considered to be backward-compatible. Additionally, the actor is guaranteed to complete its task with $\delta_A\!=\!0$ when a modification is policy-preserving with respect to the actor's optimal policy in $E_0$. Figure~\ref{fig:dbn} illustrates the dynamic Bayesian network for this class of problems, where $r_P$ denotes the reward associated with $o_P$ and $r_N$ denotes the penalty for NSE. Let $F$ denote the set of features in the environment, with the actor's policy depending on $F_P\!\subset\!F$ and $F_D\!\subset\!F$ altered by the designer's modifications. Let $\vec{f}$  and $\vec{f'}$ be the feature values before and after environment shaping. Given the actor's  $\pi$, a policy-preserving $\omega$ follows $F_D\!= F\!\setminus F_P$, ensuring $\vec{f_P}\!= \vec{f'_P}$. %When $\vec{f}\!=\!\vec{f_P} \cup \vec{f_D}$, a policy-preserving $\omega$ enforces $Pr(F_D = \vec{f_D} \vert \vec{f_P}) = 0,\forall \vec{f}$.

\begin{figure}
	\centering
	\includegraphics[height=1.7in]{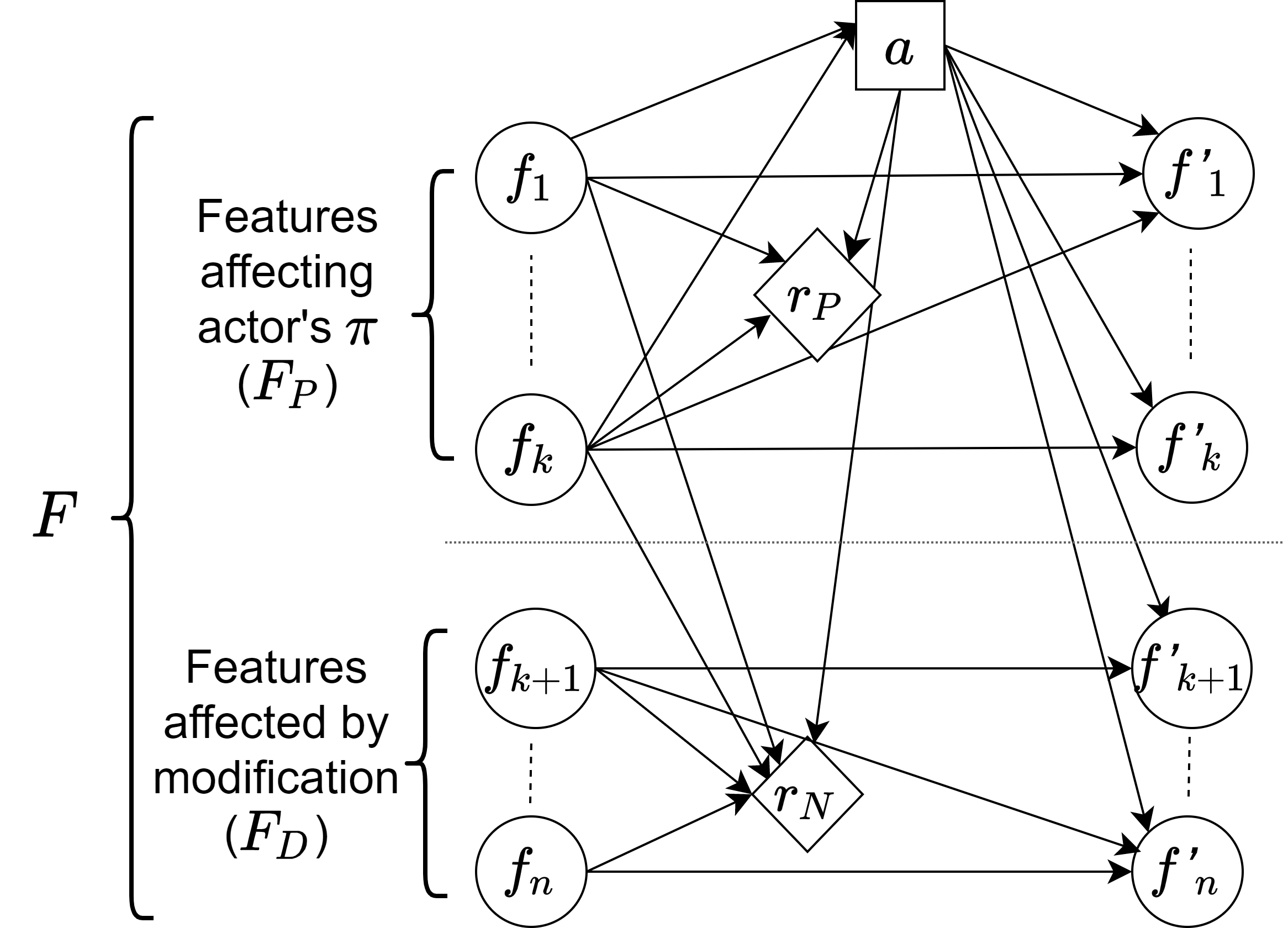}
	\caption{A dynamic Bayesian network description of a policy-preserving modification.}
	\label{fig:dbn}
\end{figure}
\begin{proposition}
	Given an environment configuration $E$ and actor's policy $\pi$, a modification $\omega \in \Omega$ is guaranteed to be \textbf{policy-preserving} if $F_D = F \setminus F_P$. \label{prop:policy-preserve}
\end{proposition}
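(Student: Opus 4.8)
The plan is to reduce the claim to a single structural fact: the actor's optimal policy for $o_P$ is a function of the environment only through those components of $T$ and $R$ that depend on the task-relevant features $F_P$, so that any modification confined to the complementary features leaves those components — and hence the policy — invariant. First I would make precise what ``the policy depends on $F_P$'' means under the factored representation. I write each state $s$ as an assignment $\vec{f}$ to the features $F$ and decompose it as $\vec{f} = (\vec{f_P}, \vec{f_{\neg P}})$, where $\vec{f_P}$ assigns the $F_P$ features and $\vec{f_{\neg P}}$ the rest. The dynamic Bayesian network of Figure~\ref{fig:dbn} encodes exactly the structural assumption I need: the primary reward $r_P$ and the transition dynamics of the $F_P$ features are conditionally independent of $F \setminus F_P$ given $F_P$; equivalently, $R$ and the $F_P$-projection of $T$ are measurable functions of $\vec{f_P}$ and the action alone, with $r_N$ being the only quantity that reads the remaining features.

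Next I would invoke the hypothesis $F_D = F \setminus F_P$. Since the designer's modification $\omega$ alters only features in $F_D$, and $F_D \cap F_P = \emptyset$, the task-relevant coordinates are untouched by $\omega$: for every state, $\vec{f_P} = \vec{f'_P}$, where primes denote values in the reconfigured environment $\Psi(E,\omega)$. Combining this with the factorization from the first step, the reward function and the $F_P$-dynamics that appear in the Bellman optimality equations for $o_P$ are literally the same functions in $E$ and in $\Psi(E,\omega)$.

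Finally, because the optimal policy $\pi$ is the $\argmax$ selection in those Bellman equations and depends on the environment only through these now-identical quantities, the value function $V^\pi_P$ and the greedy action at each state are unchanged between $E$ and $\Psi(E,\omega)$. Hence $\pi$ recomputed in $\Psi(E,\omega)$ coincides with $\pi$ in $E$ on every relevant state, which is precisely the definition of policy-preserving; this establishes the proposition (and, as the preceding discussion notes, guarantees task completion with $\delta_A = 0$).

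The main obstacle I anticipate is definitional rather than computational: avoiding circularity. Since $F_P$ is introduced as ``the features the actor's policy depends on,'' the statement can appear tautological. The substantive step is to make the DBN factorization carry real weight — to argue that $R$ and the $F_P$-transitions genuinely factor through $\vec{f_P}$ so that invariance of $\vec{f_P}$ propagates to invariance of the optimal policy. Once that factorization is granted, the remainder is a direct substitution into the Bellman equations, and care is only needed to ensure the modification does not prune reachable states relevant to $F_P$, which is ruled out by the disjointness $F_D \cap F_P = \emptyset$.
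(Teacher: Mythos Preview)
Your argument is correct, but it differs in style from the paper's. The paper gives a short proof by contradiction: assume $\omega$ satisfies $F_D \cap F_P = \emptyset$ and yet $\pi \neq \pi'$; since the actor always computes an optimal policy for $o_P$ (with a fixed tie-breaking rule), any change in the policy must be traceable to a change in some feature in $F_P$, i.e., $\vec{f_P} \neq \vec{f'_P}$, which contradicts the disjointness hypothesis. That is essentially the entire proof---it leans directly on the definitional meaning of $F_P$ as ``the features the policy depends on.''

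You instead give a direct, constructive argument: you unpack the DBN factorization so that $R$ and the $F_P$-marginal of $T$ are explicit functions of $\vec{f_P}$ alone, then show that the Bellman optimality equations for $o_P$ involve only these quantities, so invariance of $\vec{f_P}$ under $\omega$ propagates to invariance of $V^\pi_P$ and the greedy policy. This buys you something the paper's proof does not: you actually explain \emph{why} the policy is determined by $F_P$, rather than taking it as a definitional premise, and you thereby sidestep the circularity you correctly flag in your final paragraph. The paper's version is terser and adequate given how $F_P$ is introduced in the surrounding text; yours is more self-contained and would survive even if $F_P$ were defined structurally (via the DBN) rather than functionally (via the policy).
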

\begin{proof}
	We prove by contradiction. Let $\omega  \in \Omega$ be a modification consistent with $F_D \cap F_P = \emptyset$ and $F = F_P \cup F_D$. Let $\pi$ and $\pi'$ denote the actor's policy before and after shaping with $\omega$ such that $\pi\!\neq \pi'$. Since the actor always computes an optimal policy for $o_P$ (assuming fixed strategy for tie-breaking), a difference in the policy indicates that at least one feature in $F_P$ is affected by $\omega$, $\vec{f_P}\!\ne\!\vec{f'_P}$. This is a contradiction since $F_D\cap F_P\!=\!\emptyset$. Therefore $\vec{f_P}\!=\!\vec{f'_P}$ and $\pi\!=\!\pi'$. Thus, $\omega\!\in\!\Omega$ is policy-preserving when $F_D = F \setminus F_P$.
\end{proof}

\begin{corollary}
	A policy-preserving modification identified by Algorithm~\ref{algo} guarantees \emph{admissible} shaping. 
\end{corollary}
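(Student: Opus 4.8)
The plan is to show that a policy-preserving modification satisfies both conditions of admissibility (Definition~\ref{defn:admissible}) directly, so that the result follows by combining Propositions~\ref{prop:admissible} and~\ref{prop:policy-preserve}. I would treat the two admissibility conditions separately, since the first concerns the actor's task performance and the second concerns the NSE level, and each draws on a different established fact.

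For the first condition---task completion within the slack $\delta_A$---I would argue from policy invariance to value invariance. By Proposition~\ref{prop:policy-preserve}, a policy-preserving $\omega$ leaves $\vec{f_P}\!=\!\vec{f'_P}$, so the actor's policy is unchanged, $\pi = \pi'$. The key step is to observe that, per the dynamic Bayesian network of Figure~\ref{fig:dbn}, the reward $r_P$ for the primary objective and the relevant transition dynamics depend only on the features $F_P$ on which the policy depends; since these are exactly the features left untouched by $\omega$, the Bellman recursion defining $V^{\pi}_P$ is identical before and after shaping. Hence $V^{\pi'}_P(s_0 \mid E') = V^{*}_P(s_0 \mid E_0)$, so the deviation $V^{*}_P(s_0 \mid E_0) - V^{\pi'}_P(s_0 \mid E') = 0 \le \delta_A$ holds for every $\delta_A \ge 0$, including the tightest case $\delta_A = 0$. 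This establishes condition (1).

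For the second condition---that the NSE does not increase relative to $E_0$---I would invoke Proposition~\ref{prop:admissible}: Algorithm~\ref{algo} records the configuration with the least NSE encountered (Line~10) and only commits modifications under which the actor can still reach the goal (Lines 16--19), so the returned $E^*$ has NSE no greater than that of $E_0$. Since the modification in question is identified by Algorithm~\ref{algo}, it inherits this guarantee. Both conditions of Definition~\ref{defn:admissible} are therefore met, and the shaping is admissible.

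The main obstacle I anticipate is the value-invariance step rather than the bookkeeping. Policy invariance alone is not sufficient; one must also argue that the \emph{value} of that fixed policy is unchanged. This requires the structural assumption---made explicit in the DBN---that $V^{\pi}_P$ factors through $F_P$ only, so that altering $F_D = F \setminus F_P$ cannot perturb either the per-step reward or the transition probabilities entering the Bellman equation for $o_P$. Once this factorization is granted, the remaining argument is immediate.
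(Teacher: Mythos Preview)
Your proposal is correct and matches the paper's intent: the corollary is stated there without proof, as an immediate consequence of Propositions~\ref{prop:admissible} and~\ref{prop:policy-preserve}, which is exactly the combination you invoke. Your argument is in fact more careful than the paper's---your observation that policy invariance alone does not yield value invariance without the DBN factorization of Figure~\ref{fig:dbn} is a point the paper leaves entirely implicit.
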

Furthermore, a policy-preserving modification results in robust shaping when $Pr(F_D\! =\!\vec{f_D})\!=\!0$, $\forall \vec{f}\!=\!\vec{f_P}\!\cup\!\vec{f_D}$.
The policy-preserving property is sensitive to the actor's state representation. However, many real-world problems exhibit this feature independence. In the boxpushing example, the actor's state representation may not include details about the rug since it is not relevant to the task. Moving the rug to a different part of the room that is not on the actor's path to the goal is an example of a policy-preserving and admissible shaping. Modifications such as removing the rug or covering it with a protective sheet are policy-preserving and robust shaping since there is no direct contact between the box and the rug, for all policies of the actor~(Figure~\ref{fig:example-mod2}).

\paragraph{Relation to Game Theory}  Our solution approach with decoupled objectives can be viewed as a game between the actor and the designer. The action profile or the set of strategies for the actor is the policy space $\Pi$, with respect to $o_P$ and $\mathcal{E}$, with payoffs defined by its reward function $R$. The action profile for the designer is the set of all modifications $\Omega$ with payoffs defined by its utility function $U$. In each round of the game, the designer selects a modification that is a best response to the actor's policy $\pi$:
\begin{align}
b_D(\pi) = \{\omega \in \Omega \vert \forall \omega' \in \Omega, U_{\pi}(\omega) \geq U_{\pi}(\omega') \}. \label{eqn:br-designer}
\end{align}
The actor's best response is its optimal policy for $o_P$ in the current environment configuration, given $\delta_A$: 
\begin{align}
b_A(E) = \{\pi \in \Pi \vert \forall \pi' \in \Pi, V^{\pi}_P(s_0) \geq V^{\pi'}_P(s_0) ~\land \nonumber \\ V^{\pi_0}_P(s_0) - V^{\pi}_P(s_0) \leq \delta_A \}, \label{eqn:br-actor}
\end{align}
where $\pi_0$ denotes the optimal policy in $E_0$ before initiating environment shaping. 
These best responses are pure strategies since there exists a deterministic optimal policy for a discrete MDP and a modification is selected deterministically. Proposition~\ref{thrm:extensive} shows that AD-NSE is an extensive-form game, which is useful to understand the link between two often distinct fields of decision theory and game theory. This also opens the possibility for future work on game-theoretic approaches for mitigating NSE.

\begin{proposition}
	An AD-NSE with policy constraints induced by Equations~\ref{eqn:br-designer} and~\ref{eqn:br-actor} induces an equivalent extensive form game with incomplete information, denoted by $\mathcal{N}$. \label{thrm:extensive}
\end{proposition}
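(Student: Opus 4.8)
The plan is to prove this constructively: I would exhibit an explicit extensive-form game $\mathcal{N}$ and verify both that it satisfies the standard axioms of an extensive-form game with incomplete information and that its best-response structure coincides with Equations~\ref{eqn:br-designer} and~\ref{eqn:br-actor}. The player set is $\{\text{actor},\text{designer}\}$, augmented with a chance player (nature) to handle both the stochastic MDP transitions and, via Harsanyi's construction, the incompleteness of information. The action profiles are already identified in the text: the actor's pure strategies are the $\delta_A$-feasible policies in $\Pi$ and the designer's are the modifications in $\Omega$.

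First I would build the game tree $\mathcal{T}$ so that its vertical structure mirrors the alternation of the planning and shaping phases. The root corresponds to $E_0$; an actor-node branches over the $\delta_A$-feasible policies $\{\pi : V^*_P(s_0\vert E_0) - V^\pi_P(s_0\vert E) \le \delta_A\}$, and each such branch leads to a designer-node that branches over the available modifications in $\Omega$ (or the remaining ones, since modifications are tested without replacement as in Algorithm~\ref{algo}). Applying $\omega$ transitions the configuration to $\Psi(E,\omega)$, after which control returns to an actor-node in the new configuration, and the alternation repeats. Terminal nodes are exactly the states in which Algorithm~\ref{algo} halts: the NSE falls within $\delta_D$, every $\omega$ has been exhausted, or the utility-maximizing choice is $\omega^*=\emptyset$. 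Because $\mathcal{E}$ and $\Omega$ are finite and modifications are drawn without replacement, every path terminates, so $\mathcal{T}$ is a finite tree. I would then assign leaf payoffs — the actor receives $V^\pi_P(s_0\vert E)$ for the configuration reached and the designer the accrued utility $U_\pi(\omega)$ from Equation~\ref{designer-obj} — folding the stochasticity of $T$ into the expected value $V^\pi_P$ so that no explicit chance node is needed for the MDP itself.

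The equivalence claim is then the observation that Equation~\ref{eqn:br-actor} is precisely the best response at an actor-node and Equation~\ref{eqn:br-designer} is precisely the best response at a designer-node, so the best-response correspondences of $\mathcal{N}$ reproduce the decoupled optimization of AD-NSE; since each response is a pure strategy (a deterministic optimal MDP policy and a deterministic utility-maximizing modification, under the paper's fixed tie-breaking), the induced play of $\mathcal{N}$ coincides with the trajectory traced by Algorithm~\ref{algo}.

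The main obstacle is formalizing \emph{incomplete} information rather than merely imperfect information. Because the actor and designer know neither the other's model nor its payoff function, I would invoke Harsanyi's transformation: add an initial nature move that draws each player's hidden parameters (for the actor, the components of $M_a$ unknown to the designer; for the designer, $M_d$ including $N$, $C$, and $\delta_D$ unknown to the actor) from a common prior, then define information sets that bundle all nodes differing only in the opponent's drawn type. The delicate step is choosing these information sets so that (i) each player's moves are measurable with respect to exactly the information it possesses — the actor conditions only on the shared configuration file and its own model, the designer only on the shared configuration and the observed trajectories $\mathcal{D}$ — and (ii) the resulting Bayesian game's sequential best responses still reduce to Equations~\ref{eqn:br-actor} and~\ref{eqn:br-designer}. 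Verifying that this partition satisfies perfect recall and that the equivalence survives the type construction is where the care is required; the remaining extensive-form axioms (finiteness, a well-defined player-move function, leaf payoffs) follow directly from the tree construction above.
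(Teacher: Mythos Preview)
Your proposal is correct and follows the same underlying idea as the paper: the alternation of planning and shaping phases supplies the sequential structure of an extensive-form game with strategy sets $\Pi$ and $\Omega$, and the mutual ignorance of the other player's model and payoffs is what makes the game one of incomplete information. The paper's argument is only a short proof sketch that states this correspondence and notes that the designer does not know how a modification will affect the actor's policy while the actor does not know the NSE penalties; it does not explicitly build the tree, assign leaf payoffs, or verify the extensive-form axioms.

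Your version is therefore a considerably more detailed elaboration of the same approach rather than a different one. Two minor remarks: (i) the paper places the Harsanyi transformation and the ensuing Bayesian-equilibrium discussion \emph{after} the proof as a separate observation, whereas you fold it into the construction itself; either placement is fine, but be aware that the proposition as stated only asks for an extensive-form game with incomplete information, so the Harsanyi step is not strictly needed to discharge it. (ii) Your opening sentence promises a chance player for the MDP transitions, but you later absorb that stochasticity into $V^\pi_P$ and drop those chance nodes; this is harmless, but you may want to keep the exposition consistent.
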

\begin{proofsketch}
	Our solution approach alternates between the planning phase and the shaping phase. This induces an extensive form game $\mathcal{N}$ with strategy profiles $\Omega$ for the designer and $\Pi$ for the actor, given start state $s_0$. However, each agent selects a best response based on the information available to it and its payoff matrix, and is unaware of the strategies and payoffs of the other agent. The designer is unaware of how its modifications may impact the actor's policy until the actor recomputes its policy and the actor is unaware of the NSE of its actions. Hence this is an extensive form game with incomplete information.
	\end{proofsketch}
Using Harsanyi transformation, $\mathcal{N}$ can be converted to a Bayesian extensive-form game, assuming the players are rational~\cite{harsanyi1967games}. This converts $\mathcal{N}$ to a game with complete but imperfect information as follows. Nature selects a type $\tau_i$, actor or designer, for each player, which determines its available actions and its payoffs parameterized by $\theta_A$ and $\theta_D$ for the actor and designer, respectively. Each player knows its type but does not know $\tau$ and  $\theta$ of the other player. If the players maintain a probability distribution over the possible types of the other player and update it as the game evolves, then the players select a best response, given their strategies and beliefs. This satisfies consistency and sequential rationality property, and therefore there exists a perfect Bayesian equilibrium for $\mathcal{N}$. 

\section{Extension to Multiple Actors}
Our approach can be extended to settings with multiple actors and one designer, with slight modifications to Equation~\ref{designer-obj} and Algorithm~\ref{algo}. This models large environments such as infrastructure modifications to cities. The actors are assumed to be homogeneous in their capabilities, tasks, and models, but may have different start and goal states. For example, consider multiple self-driving cars that have the same capabilities, task, and model but are navigating between different locations in the environment. When optimizing travel time, the vehicles may travel at a high velocity through potholes, resulting in a bumpy ride for the passenger as a side effect. Driving fast through deep potholes may also damage the car. If the road is rarely used, the utility-maximizing design may be to reduce the speed limit for that segment. If the road segment with potholes is frequently used by multiple vehicles, filling the potholes reduces the NSE for all actors. 
The designer's utility for an $\omega$, given $K$ actors, is:
\[U_{\vec{\pi}}(\omega) = \sum_{i = 1}^{K} \Big(N^E_{\pi_i} - N^{\Psi(E,\omega)}_{\pi_i} \Big) - C(E,\omega), \]

with $\vec{\pi}$ denoting policies of the $K$ actors. More complex ways of estimating the utility may be considered. The policy is recomputed for all the actors when the environment is modified. It is assumed that shaping does not introduce a multi-agent coordination problem that did not pre-exist. If a modification impacts the agent coordination for $o_P$, it will be reflected in the slack violation. Shaping for multiple actors requires preserving the slack guarantees of all actors. A policy-preserving modification induces policy invariance for all the actors in this setting.

\section{Experiments}
We present two sets of experimental results. First, we report the results of our user study conducted specifically to validate two key assumptions of our work: (1) users are willing to engage in shaping and (2) users want to mitigate NSE even when they are not safety-critical. Second, we evaluate the effectiveness of shaping in mitigating NSE on two domains in simulation. The user study complements the experiments that evaluate our algorithmic contribution to solve this problem.

\subsection{User Study}
\noindent \textbf{Setup}~~~ We conducted two IRB-approved surveys focusing on two domains \emph{similar} to the settings in our simulation experiments. The first is a household robot with capabilities such as cleaning the floor and pushing boxes. The second is an autonomous driving domain. We surveyed the participants to understand their attitudes to NSE such as the household robot spraying water on the wall when cleaning the floor, the autonomous vehicle (AV) driving fast through potholes which results in a bumpy ride for the users, and the AV slamming the brakes to halt at stop signs which results in sudden jerks for the passengers.
We recruited 500 participants on Amazon Mturk to complete a pre-survey questionnaire to assess their familiarity with AI systems and fluency in English. Based on the pre-survey responses, we invited 300 participants aged above 30 to complete the main survey, since they are less likely to game the system~\cite{downs2010your}. Participants were required to select the option that best describes their attitude towards NSE occurrence. Responses that were incomplete or with a survey completion time of less than one minute were discarded and 180 valid responses for each domain are analyzed.

\vspace{6pt}
\noindent \textbf{Tolerance to NSE$~~~$} We surveyed participants to understand their attitude towards NSE that are not safety-critical and whether such NSE affects the user's trust in the system. 

For the household robot setting, $76.66\%$ of the participants indicated willingness to tolerate the NSE. For the driving domain, $87.77\%$ were willing to tolerate milder NSE such bumpiness when the AV drives fast through potholes and $57.22\%$ were willing to tolerate relatively severe NSE such as hard braking at a stop sign. Participants were also required to enter a tolerance score on a scale of 1 to 5, with 5 being the highest. Figure~\ref{fig:tolerance-score} shows the distribution of the user tolerance score for the two domains in our human subjects study. The mean, along with the $95\%$ confidence interval for the tolerance scores are $3.06 \pm 0.197$ for the household robot domain and $3.17 \pm 0.166$ for the AV domain. For household robot domain, $66.11\%$ voted a score of $\ge 3$ and $75\%$ voted a score of $\ge 3$ for the AV domain. Furthermore, $52.22\%$ respondents of the household robot survey voted that their trust in the system's abilities is unaffected by the NSE and $34.44\%$ indicated that their trust may be affected if the NSE are not mitigated over time. Similarly for the AV driving domain, $50.55\%$ voted that their trust is unaffected by the NSE and $42.78\%$ voted that NSE may affect their trust if the NSE are not mitigated over time. 
These results suggest that (1) individual preferences and tolerance of NSE varies and depends on the severity of NSE; (2) users are generally willing to tolerate NSE that are not severe or safety-critical, but prefer to reduce them as much as possible; and (3) mitigating NSE is important to improve trust in AI systems.

\begin{figure}
	\centering
	\includegraphics[scale=0.4]{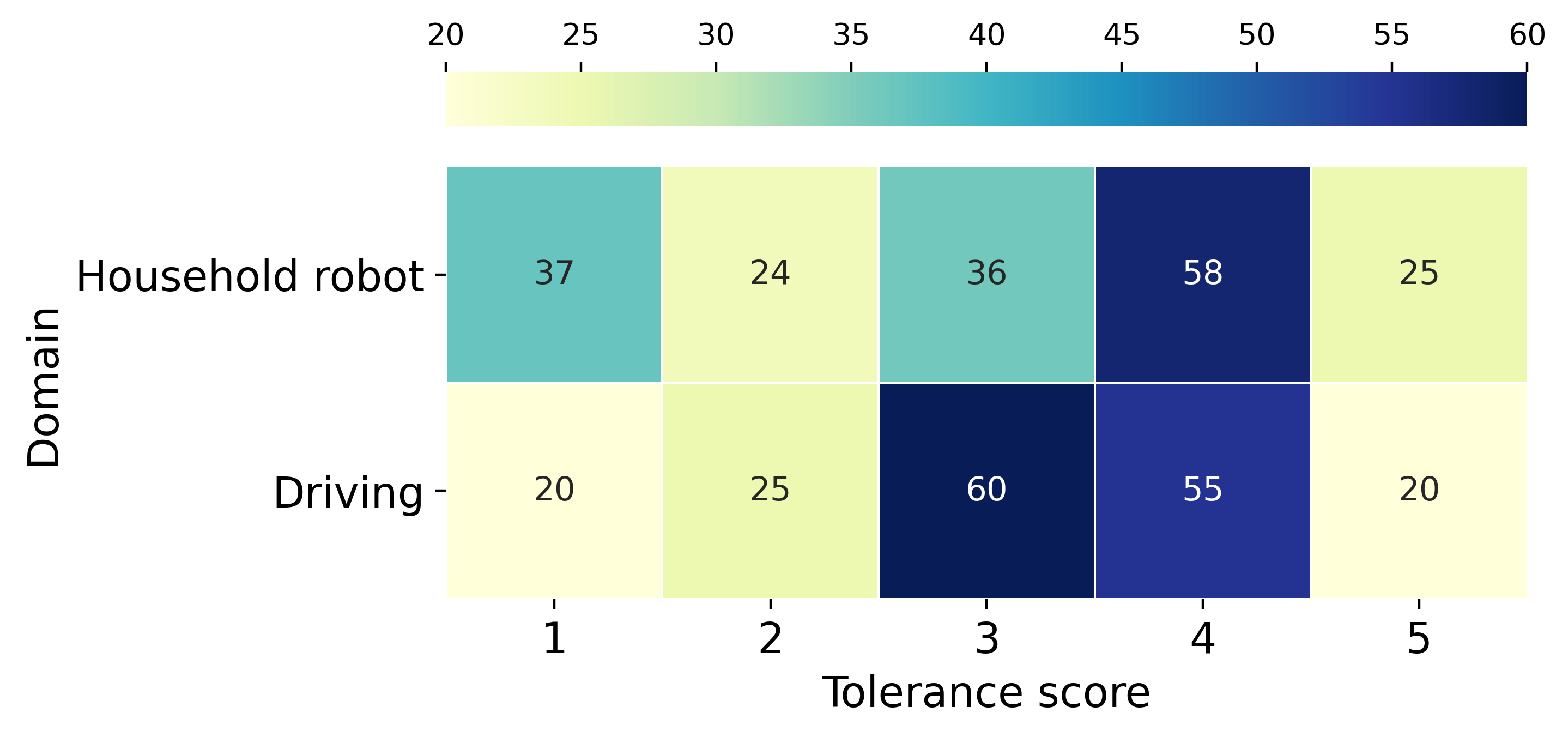}
	\caption{User tolerance score.}
	\label{fig:tolerance-score}
\end{figure}

\vspace{6pt}
\noindent \textbf{Willingness to perform shaping$~~~$} We surveyed participants to determine their willingness to perform environment shaping to mitigate the impacts of NSE. In the household robot domain, shaping involved adding a protective sheet on the surface. In the driving domain, shaping involved installing a pothole-detection sensor that detects potholes and limits the vehicle's speed. Our results show that $74.45\%$ participants are willing to engage in shaping for the household robot domain and $92.5\%$ for the AV domain. Among the respondents who were willing to install the sheet in the household robot domain, $64.39\%$ are willing to purchase the sheet ($\$10$) if it not provided by the manufacturer. Similarly, $62.04\%$ were willing to purchase the sensor for the AV, which costs $\$50$. These results indicate that (1) users are generally willing to engage in environment shaping to mitigate the impacts of NSE; and (2) many users are willing to pay for environment shaping as long as it is generally affordable, relative to the cost of the AI system.

\subsection{Evaluation in Simulation}
We evaluate the effectiveness of shaping in two domains: boxpushing (Figure~\ref{fig:example}) and AV driving (Figure~\ref{fig:driving}). The effectiveness of shaping is studied in settings with avoidable NSE, unavoidable NSE, and with multiple actors.   

\begin{figure}[t]
	\centering
	\includegraphics[height=1.25in]{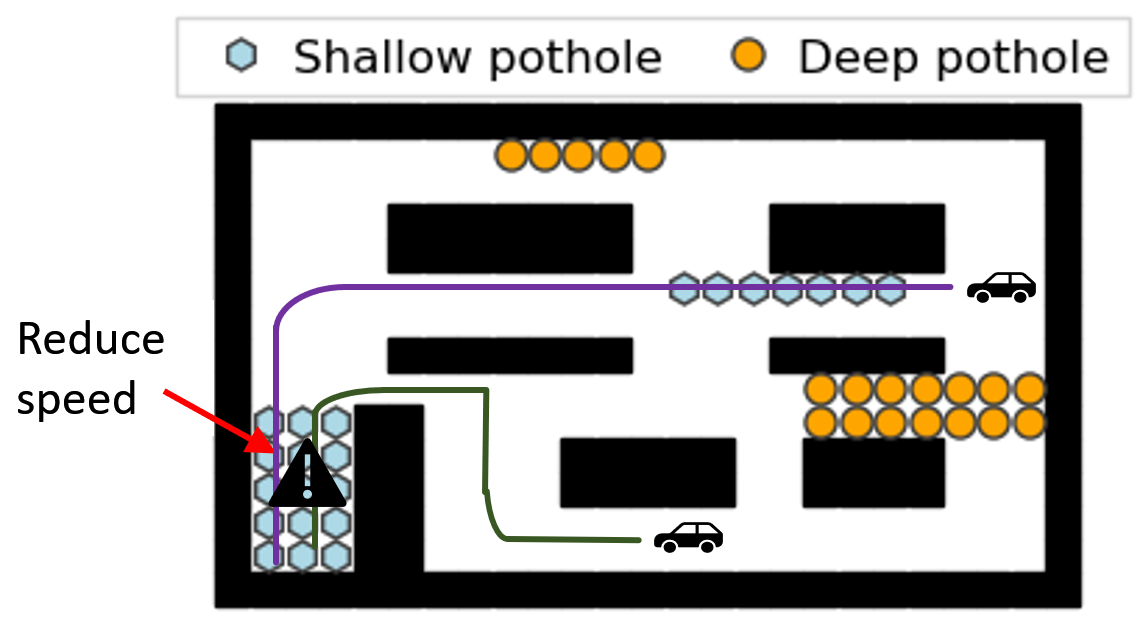}
	\caption{AV environment shaping with multiple actors.}
	\label{fig:driving}
	\vspace{-4pt}
\end{figure}

\begin{figure*}
	\centering
	\subfigure[Average NSE penalty.]{\includegraphics[scale=0.35]{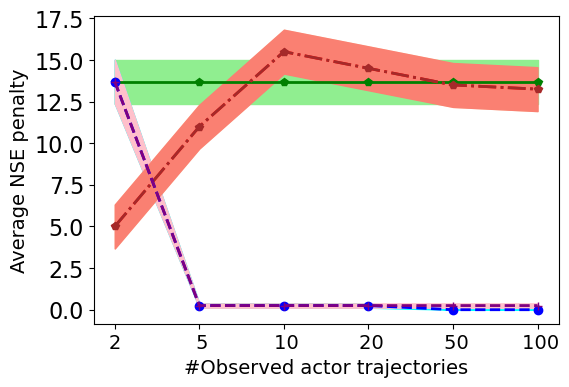} \label{fig:bp-shaping-traj}} 
	\subfigure[Expected cost of $o_P$.]{\includegraphics[scale=0.35]{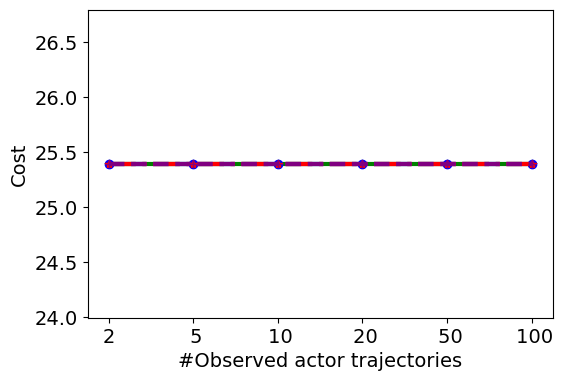} \label{fig:bp-primary}} 
	\subfigure[Average NSE penalty.]{\includegraphics[scale=0.35]{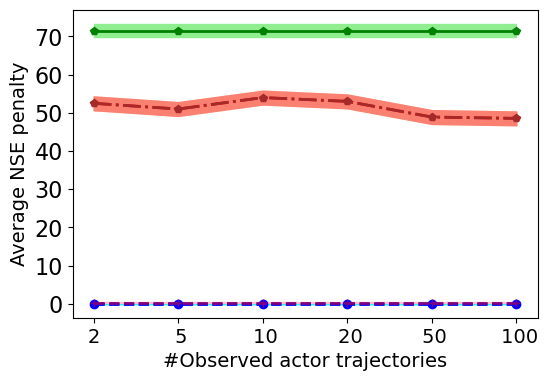} \label{fig:bp-shaping-unavoidable}} 
	\subfigure[Expected cost of $o_P$.]{\includegraphics[scale=0.35]{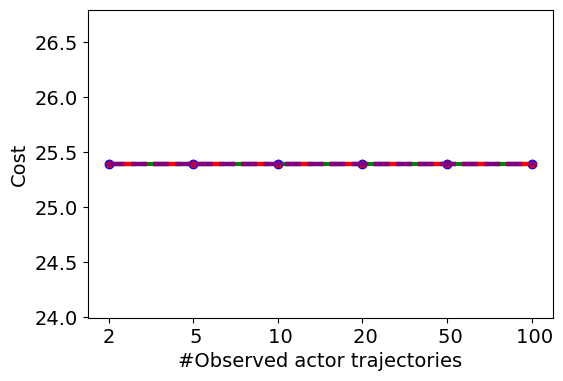} \label{fig:bp-primary-unavoidable}} 
	\subfigure[Legend for plots (a)-(d)]{\includegraphics[scale=0.42]{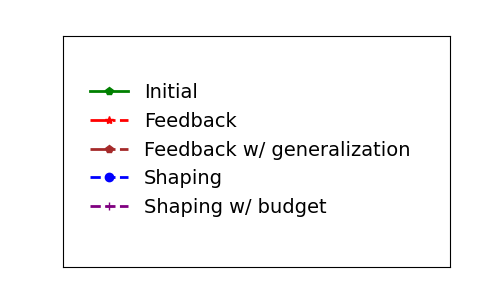}}
	\subfigure[\# Modifications evaluated when NSE are avoidable]{	\begin{tabular}[b]{|c|c|c|}
			\hline
			\# Trajectories &  Shaping & Shaping w/ budget \\ \hline
			2 & 1 & 1 \\
			5 & 2 & 1 \\
			10 & 2 & 1 \\
			20 & 2 & 1 \\
			50 & 6 & 3 \\
			100 & 6 & 3 \\
			\hline
		\end{tabular} \label{tab:modifications-bp}}
	\caption{Results on boxpushing domain with avoidable NSE (a-b) and unavoidable NSE (c-d).}
	\label{fig:bp-results}
\end{figure*}

\vspace{6pt}
\noindent \textbf{Baselines$~~~$} The performance of shaping with budget is compared with the following baselines. First is the \emph{Initial} approach in which the actor's policy is naively executed and does not involve shaping or any form of learning to mitigate NSE. Second is the \emph{shaping} with exhaustive search to select a modification, $b=\vert \Omega\vert$. Third is the \emph{Feedback} approach in which the 
gent performs a trajectory of its optimal policy for $o_P$ and the human approves or disapproves the observed actions based on the NSE occurrence~\cite{SKZijcai20}.
The agent then disables all the disapproved actions and recomputes a policy for execution. If the updated policy violates the slack, the actor ignores the feedbacks and executes its initial policy since $o_P$ is prioritized. We consider two variants of the feedback approach: with and without generalizing the gathered information to unseen situations. In \emph{Feedback w/ generalization}, the actor uses the human feedback as training data to learn a predictive model of NSE occurrence. The actor disables the actions disapproved by the human and those predicted by the model, and recomputes a policy. 

We use Jaccard distance to measure the similarity between environment configurations. A random forest classifier from the \texttt{sklearn} Python package is used for learning a predictive model. The actor's MDP is solved using value iteration. The algorithms were implemented in Python and tested on a computer with 16GB of RAM. We optimize action costs, which are negation of rewards. Values averaged over 100 trials of planning and execution, along with the standard errors, are reported for the following domains.

\vspace{6pt}
\noindent \textbf{Boxpushing$~~~$} We consider a boxpushing domain~\cite{SKZijcai20} in which the actor is required to minimize the expected time taken to push a box to the goal location (Figure~\ref{fig:example}). Each action costs one time unit. The actions succeed with probability $0.9$ and may slide right with probability $0.1$. Each state is represented as $\langle x,y,b\rangle$ where $x,y$ denote the agent's location and $b$ is a boolean variable indicating if the agent is pushing the box. Pushing the box over the rug or knocking over a vase on its way results in NSE, incurring a penalty of 5. The designers are the system users. We experiment with grid size $15\!\times 15$.

\vspace{6pt}
\noindent \textbf{Driving$~~~$}  Our second domain is based on simulated autonomous driving~\cite{SKZijcai20} in which the actor's objective is to minimize the expected cost of navigation from start to a goal location, during which it may encounter some potholes. Each state is the agent's location represented by $\langle x,y\rangle$. We consider a grid of size $25 \times 15$. The actor can move in all four directions at low and high speeds, with costs 2 and 1 respectively. Driving fast through shallow potholes results in a bumpy ride for the user, which is a mild NSE with a penalty of $2$. Driving fast through a deep pothole may damage the car in addition to the unpleasant experience for the rider and therefore it is a severe NSE with a penalty of $5$. Infrastructure management authorities are the designers for this setting and the state space is divided into four zones, similar to geographical divisions of cities for urban planning. 

\vspace{6pt}
\noindent \textbf{Available Modifications$~~~$} We consider 24 modifications for the boxpushing domain, such as adding a protective sheet over the rug, moving the vase to corners of the room, removing the rug, block access to the rug and vase area, among others. Removing the rug costs $0.4$/unit area covered by the rug, moving the vase costs $1$, and all other modifications cost $0.2$/unit. Except for blocking access to the rug and vase area, all the other modifications are policy-preserving for the actor state representation described earlier. The modifications considered for the driving domain are: reduce the speed limit in zone $i$, $1 \leq i \le 4$; reduce speed limits in all four zones; fill all potholes; fill deep potholes in all zones; reduce the speed limit in zones with shallow potholes; and fill deep potholes. Reducing the speed costs $0.5$ units per pothole in that zone and filling each pothole costs $2$ units. Reducing the speed limit disables the `move fast' action for the actor. Filling the potholes is a policy-preserving modification.

\vspace{6pt}
\noindent \textbf{Effectiveness of shaping$~~~$} The effectiveness of shaping is evaluated in terms of the average NSE penalty incurred and the expected value of $o_P$ after shaping. Figure~\ref{fig:bp-results} plots the results with $\delta_A\!=\!0$, $\delta_D\!=\!0$, and $b\!=\!3$, as the number of observed actor trajectories is increased. The results are plotted for the boxpushing domain with one actor in settings with avoidable and unavoidable NSE. Feedback budget is 500. Feedback without generalization does not minimize NSE and performs similar to \emph{initial} baseline, because the actor has no knowledge about the undesirability of actions not included in the demonstration. As a result, when an action is disapproved, the actor may execute an alternate action that is equally bad or worse than the initial action in that state, in terms of NSE. Generalizing the feedback overcomes this drawback and mitigates the NSE relatively. With at most five trajectories, the designer is able to select a \emph{policy-preserving} modification that avoids NSE. Shaping w/ budget $b=3$ performs similar to shaping by evaluating all modifications. The trend in the relative performances of the different techniques is similar for both avoidable and unavoidable NSE. Table~\ref{tab:modifications-bp} shows that shaping w/ budget reduces the number of shaping evaluations by $50\%$.

\begin{figure}
	\centering
	\includegraphics[scale=0.4]{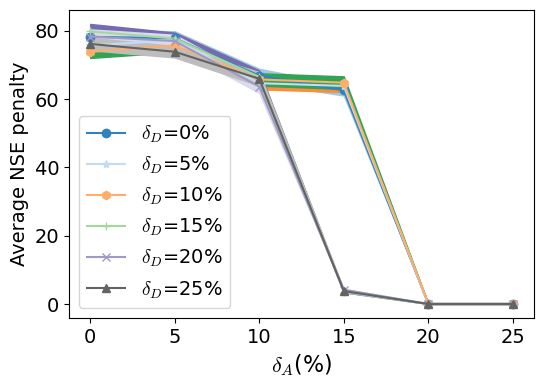}
	\vspace{-4pt}
	\caption{Effect of $\delta_A$ and $\delta_D$ on the average NSE penalty.}
	\label{fig:slack}
	\vspace{-6pt}
\end{figure}

\vspace{6pt}
\noindent \textbf{Effect of slack on shaping$~~~$} We vary $\delta_A$ and $\delta_D$, and plot the resulting NSE penalty in Figure~\ref{fig:slack}. We report results for the driving domain with a single actor and designer, and shaping based on 100 observed trajectories of the actor. We vary $\delta_A$ between 0-$25\%$ of $V_P^*(s_0|E_0)$ and $\delta_D$ between 0-$25\%$ of the NSE penalty of the actor's policy in $E_0$. Figure~\ref{fig:slack} shows that increasing the slack helps reduce the NSE, as expected. In particular, when $\delta_D\!\ge\!15\%$, the NSE penalty is considerably reduced with $\delta_A\!=\!15\%$. Overall, increasing $\delta_A$ is most effective in reducing the NSE. We also tested the effect of slack on the cost for $o_P$. 
The results showed that the cost was predominantly affected by $\delta_A$. We observed similar performance for all values of $\delta_D$ for a fixed $\delta_A$ and therefore do not include that plot. This is an expected behavior since $o_P$ is prioritized and shaping is performed in response to $\pi$.

\vspace{6pt}
\noindent \textbf{Shaping with multiple actors$~~~$} We measure the cumulative NSE penalty incurred by varying the number of actors in the environment from 10 to 300. Figure~\ref{fig:multipleactors} shows the results for the driving domain, with $\delta_D\!=\!0$ and $\delta_A\!=\!25\%$ of the optimal cost for $o_P$, for each actor. The start and goal locations were randomly generated for the actors. Shaping and feedbacks are based on 100 observed trajectories of each actor. Shaping w/ budget results are plotted with $b\!=\!4$, which is 50\%$\vert \Omega\vert$. The feedback budget for \emph{each} actor is 500. Although the feedback approach is sometimes comparable to shaping when there are fewer actors, it requires overseeing each actor and providing individual feedback, which is impractical. Overall, as we increase the number of agents, substantial reduction in NSE is observed with shaping. The time taken for shaping and shaping w/ budget are comparable up to 100 actors, beyond which we see considerable time savings of using shaping w/ budget. For 300 actors, the average time taken for shaping is 752.27 seconds and the time taken for shaping w/budget is 490.953 seconds.

\begin{figure}
	\centering
	\includegraphics[scale=0.45]{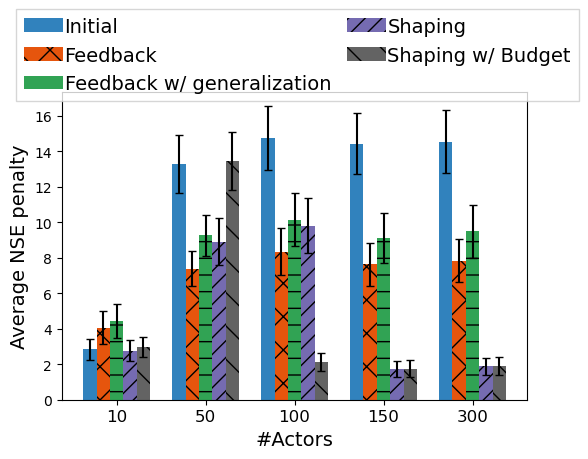}
	\vspace{-4pt}
	\caption{Results with multiple actors.}
	\label{fig:multipleactors}
	\vspace{-4pt}
\end{figure}

\section{Discussion and Future Work}
We present an actor-designer framework to mitigate the impacts of NSE by shaping the environment. The general idea of environment modification to influence the behaviors of the acting agents has been previously explored in other contexts~\cite{zhang2009general}, such as to accelerate agent learning~\cite{randlov2000shaping}, to quickly infer the goals of the actor~\cite{keren2014goal}, and to maximize the agent's reward~\cite{keren2017equi}. We study how environment shaping can mitigate NSE. We also identify the conditions under which the actor's policy is unaffected by shaping and show that our approach guarantees bounded-performance of the actor. 

Our approach is well-suited for settings where the agent is well-trained to perform its assigned task and must perform it repeatedly, but NSE are discovered after deployment. Altering the agent's model after deployment requires the environment designer to be (or have the knowledge of) the designer of the AI system. Otherwise, the system will have to be suspended and returned to the manufacturer. Our framework provides a tool for the users to mitigate NSE without making any assumptions about the agent’s model, its learning capabilities, the representation format for the agent to effectively use new information, or the designer's knowledge about the agent's model. We argue that is many contexts, it is much easier for users to reconfigure the environment than to accurately update the agent’s model. Our algorithm guides the users in the shaping process.

In addition to the greedy approach described in Algorithm~\ref{clustering}, we also tested a clustering-based approach to identify diverse modifications. In our experiments, the clustering approach performed comparably to the greedy approach in identifying diverse modifications but had a longer run time. It is likely that benefits of the clustering approach would be more evident in settings with a much larger $\Omega$, since it can quickly group similar modifications. In the future, we aim to experiment with a large set of modifications, say $\vert \Omega \vert > 100$, and compare the performance of the clustering approach with that of Algorithm~\ref{clustering}. 

When the actor's model $M_a$ has a large state space, existing algorithms for solving large MDPs may be leveraged. When $M_a$ is solved approximately, without bounded guarantees, slack guarantees cannot be established. Extending our approach to multiple actors with different models is an interesting future direction.

We conducted experiments with human subjects to validate their willingness to perform environment shaping. In our user study, we focused on scenarios that are more accessible to participants from diverse backgrounds so that we can obtain reliable responses regarding their overall attitude to NSE and shaping. In the future, we aim to evaluate whether Algorithm~\ref{algo} aligns with human judgment in selecting the best modification. Additionally, we will examine ways to automatically identify useful modifications in a large space of valid modifications.

\paragraph{Acknowledgments}
Support for this work was provided in part by the Semiconductor Research Corporation under grant \#2906.001.
\bibliography{References}

\end{document}